\documentclass[journal]{IEEEtran}
\usepackage{setspace}
\usepackage{balance}
\usepackage{caption}

\ifCLASSOPTIONcompsoc
    \usepackage[caption=false, font=normalsize, labelfont=sf, textfont=sf]{subfig}
\else
\usepackage[caption=false, font=footnotesize]{subfig}
\fi

\usepackage{amsmath,amssymb,bm}
\usepackage{amsthm}
\usepackage{xpatch}
\makeatletter
\xpatchcmd{\proof}{\@addpunct{.}}{\@addpunct{:}}{}{}
\makeatother

\usepackage{amsfonts,dsfont,color,bbm}
\usepackage{booktabs}
\usepackage{multirow}
\usepackage{multicol}
\usepackage{cite}
\usepackage{graphicx}
\usepackage{hyperref}

\renewcommand{\L}{\mathcal{L}}

\newcommand{\F}{\mathcal{F}}

\newcommand{\h}{\vec{h}}
\renewcommand{\k}{^{(k)}}

\newcommand{\W}{\vec{W}}
\newcommand{\V}{\vec{V}}

\renewcommand{\b}{\vec{b}}
\renewcommand{\vec}[1]{\mbox{\boldmath${#1}$}}

\newcommand{\X}{\vec{X}}

\renewcommand{\and}{ \,\mathrm{and}\, }
\newcommand{\real}{\mathbb{R}}

\renewcommand{\th}{^{\mathrm{th}}}
\renewcommand{\P}{\mathcal{P}}

\newcommand{\card}[1]{\vert{#1}\vert}

\newcommand{\x}{\vec{x}}
\renewcommand{\xi}{\vec{x}^{(i)}}

\DeclareMathOperator*{\argmax}{arg\,max}

\theoremstyle{plain}

\newtheorem{lemma}{Lemma}

\newtheoremstyle{customremark}
{}                
{}                
{}        
{}                
{\itshape\bfseries}       
{.}               
{ }               
{}                

\theoremstyle{customremark}
\newtheorem{remark}{Remark}
\theoremstyle{definition}
\newtheorem{definition}{Definition}

\begin{document}

\title{Multi-Label Sentiment Analysis on 100 Languages with Dynamic Weighting for Label Imbalance}

\author{Selim~F.~Yilmaz,~E.~Batuhan~Kaynak,~Aykut~Ko\c{c},~\IEEEmembership{Senior Member,~IEEE,}\\Hamdi~Dibeklio\u{g}lu,~\IEEEmembership{Member,~IEEE}~and~Suleyman~S.~Kozat,~\IEEEmembership{Senior Member,~IEEE}
\thanks{This work is supported in part by Outstanding Researcher Programme
Turkish Academy of Sciences.}
\thanks{S.~F.~Yilmaz, A.~Ko\c{c} and S.~S.~Kozat are with the Department of Electrical and Electronics
Engineering, Bilkent University, 06800 Ankara, Turkey (e-mail: \{syilmaz,kozat\}@ee.bilkent.edu.tr, aykut.koc@bilkent.edu.tr).
}
\thanks{A~Ko\c{c} is also with the National Magnetic Resonance Center UMRAM, 06800 Ankara, Turkey.}
\thanks{S.~S.~Kozat is also with the DataBoss A.S., Bilkent Cyberpark, 06800 Ankara, Turkey (e-mail: serdar.kozat@data-boss.com.tr).}
\thanks{B.~Kaynak~and~H.~Dibeklio\u{g}lu are with the Department of Computer Engineering, Bilkent University, 06800 Ankara, Turkey (e-mail: batuhan.kaynak@bilkent.edu.tr, dibeklioglu@cs.bilkent.edu.tr).
}
}

\maketitle

\begin{abstract}
We investigate cross-lingual sentiment analysis, which has attracted significant attention due to its applications in various areas including market research, politics and social sciences. In particular, we introduce a sentiment analysis framework in multi-label setting as it obeys Plutchik wheel of emotions. We introduce a novel dynamic weighting method that balances the contribution from each class during training, unlike previous static weighting methods that assign non-changing weights based on their class frequency. Moreover, we adapt the focal loss that favors harder instances from single-label object recognition literature to our multi-label setting. Furthermore, we derive a method to choose optimal class-specific thresholds that maximize the macro-f1 score in linear time complexity. Through an extensive set of experiments, we show that our method obtains the state-of-the-art performance in 7 of 9 metrics in 3 different languages using a  single model compared to the common baselines and the best-performing methods in the SemEval competition. We publicly share our code for our model, which can perform sentiment analysis in 100 languages, to facilitate further research.
\end{abstract}

\begin{IEEEkeywords}
Sentiment analysis, cross-lingual, label imbalance, multi-label, macro-f1 maximization, social media, natural language processing.
\end{IEEEkeywords}
\section{Introduction}

\subsection{Preliminaries}
\label{sec:introduction}
\IEEEPARstart{W}{e} study sentiment analysis problem in multi-label setting, which has been widely studied in the literature due to its significance in various applications including market research, politics, public health and disaster management~\cite{SemEval2018Task1,zhusentivec,liu2012sentiment}. In particular, we introduce a method for cross-lingual sentiment analysis, which is a harder problem than the standard sentiment analysis problem since one needs to make predictions for various languages including even unseen ones. Cross-lingual sentiment analysis aims to leverage high-quality and abundant resources in English for classification to improve the classification performance of resource-scarce languages~\cite{wang2020coarse}. Moreover, we employ data of three languages to obtain the best score in 7 of 9 metrics of Arabic, English and Spanish languages in the SemEval emotion classification~\cite{SemEval2018Task1}.

\begin{figure}[t!]
    \centering
    \includegraphics[width=0.47\textwidth]{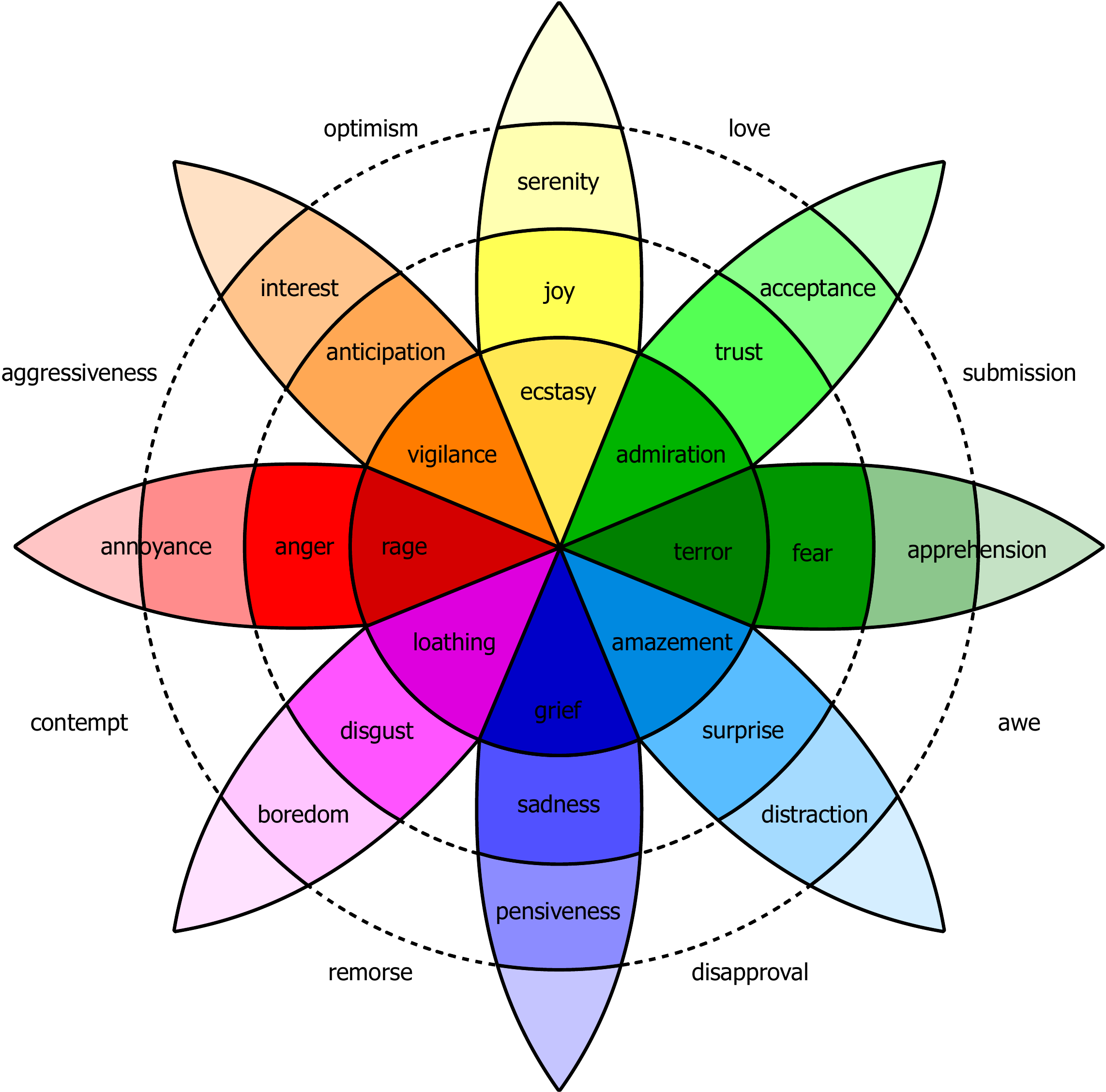}
    \caption{Plutchik's~\cite{plutchik1980general} wheel of emotions.}
    \label{fig:wheel_of_emotions}
\end{figure}

Plutchik~\cite{plutchik1980general}, in 1980, has created the wheel of emotions in his psychoevolutionary theory of emotion to illustrate his idea of emotion, which is shown in Fig.~\ref{fig:wheel_of_emotions}. He suggests eight bipolar primary emotions that appear on the opposite sides of the wheel: joy versus sadness, anger versus fear, disgust versus trust, surprise versus anticipation. The primary emotions are expressed at different intensities and the intermediate emotions occur as a mix of these primary emotions. Moreover, the emotions are non-exclusive in Plutchik's model as their combinations derive other emotions, and there exist correlations between the emotions, e.g., joy and sadness are represented as the opposite emotions. Following the Plutchik’s theory~\cite{plutchik1980general}, we formulate the sentiment analysis as the multi-label classification task, in which more than one label can be assigned to a text simultaneously. Yet, the class imbalance is an inherent issue in multi-label classification~\cite{xu2019survey}. Although the class imbalance has been extensively studied for the binary classification setting, it remains a challenge in multi-label classification
~\cite{xu2019survey}. Furthermore, the tail labels, i.e., the labels with a low number of instances, impact the performance significantly less compared to the common labels when the classes are equally weighted in the multi-label setting due to the rarity of relevant examples and result in suboptimal performance~\cite{wei2019does}. Thus, we introduce a dynamic weighting method to dynamically adjust the class weights during training to remedy the class imbalance.

In this article, we introduce a multilingual sentiment analysis framework in multi-label setting on 100 different languages. Our method uses focal loss to enhance the importance of hard examples. We introduce a dynamic weighting method to cope with the label imbalance. We also derive a macro-f1 maximization method within linear time complexity. Our method achieves the best result for 7 out of 9 metrics for the SemEval competition for Arabic, English and Spanish languages~\cite{SemEval2018Task1}. We also demonstrate the performance of our method on cross-lingual combinations of the datasets and assess the performance gains obtained by the components in our method.

\subsection{Prior Art and Comparisons}
Deep learning based methods have been shown to be successful in various classification tasks~\cite{lecun2015deep}. Transfer learning approaches have been popular in the sentiment analysis and shown to be successful especially in datasets with small number of instances~\cite{deepmoji}. Through transfer learning, an extensive amount of unlabeled data in the social media have been incorporated to increase the performance of the target sentiment analysis task, e.g., \cite{deepmoji} employs 1.7 billion tweets with emojis to pretrain the network. However, \cite{kant2018practical} demonstrates that the transfer learning approach does not improve the performance on the SemEval emotion classification competition datasets, which is our target due to the richness of its labels, which has significantly more number of instances compared to the number of instances used in~\cite{deepmoji}. \cite{suttles2013distant} emulates a multi-label classifier through a binary classifier for each of the four opposite emotions that are on the opposite sides of the Plutchik's wheel of emotions as shown in Fig.~\ref{fig:wheel_of_emotions} such as joy and sadness. However, their approach does not include the correlations to the rest of the labels since they train each of the four classifiers with the objective of binary classification of the opposite sides. To remedy those issues, we introduce a multi-label deep learning model for the emoji prediction task that directly predicts the active set of labels simultaneously, i.e., in the multi-task setting. Moreover, the multi-label classification is a generalization of binary and multi-class classification tasks as we describe through remarks, thus, our method is also applicable to these tasks. The multi-label classification also requires a prediction method that converts scores into the predictions, for which we derive a class specific thresholding by macro-f1 maximization in linear time complexity.

Multi-label classification has an inherent issue of data imbalance~\cite{xu2019survey}. Although significant research has been performed in the literature, the class imbalance problem remains a challenge for multi-label classification~\cite{xu2019survey}. Consider the multi-label classification task with $16$ distinct labels. There are $2^{16}$ possible combinations in the superset of the labels. Accordingly, it is not feasible to obtain balanced data for each combination of the labels. Many studies in multi-output classification either try to balance the data by resampling or ignore the imbalance~\cite{xu2019survey}. Yet, the over-sampling and under-sampling methods are not designed for the multi-label classification, thus, their adaptation to the multi-label setting is not straightforward~\cite{xu2019survey}. One heuristic that is widely adapted is using inverse class frequency per class as a weighting factor~\cite{huang2016learning}. However, this heuristic results in suboptimal performance as shown by~\cite{cui2019class} and in Section~\ref{sec:ablation_loss}. \cite{cui2019class} replaces the inverse number of instances with the expected volume of instances and a controlling hyperparameter. \cite{aurelio2019learning} proposes to use class prior probabilities as weights for the cross-entropy loss. Commonly, these methods propose static weights for each class. To remedy the label imbalance in the multi-label setting, we introduce a novel dynamic weighting method, which equalizes the contribution of each class to the loss. We use focal loss~\cite{lin2017focal} to incorporate the hardness of the instances and our dynamic weighting method can readily be adapted to other losses as we show through a remark.

Recent language models such as BERT~\cite{BERT} have been dominating the areas in the NLP literature, however, they contain an excessive amount of parameters. Accordingly, training or fine-tuning these models require an excessive amount of resources~\cite{BERT}. We employ RoBERTa-XLM~\cite{conneau2019unsupervised}, which is a robustly trained BERT on 100 languages, as feature extractor to benefit from BERT as well as reducing the amount of required resources.

SemEval emotion classification competition~\cite{SemEval2018Task1} has paved the way for many multi-label sentiment analysis models. \textit{EMA}, \textit{PARTNA} are among the models that opt for the more traditional support vector machine approaches and still achieve the best results in the Arabic language~\cite{badaro2018ema}. On the other hand, more recent long short-term memory (LSTM), convolutional neural network (CNNs) and attention models are also adopted to obtain the highest ranked results in English and Spanish~\cite{baziotis2018ntua, gee2018psyml}. It is important to note that most of these models are language specific, and use special embeddings such as AraVec~\cite{aravec} or special lexicons paired with language specific preprocessing steps. \textit{Tw-StAR} attempts to create a generic model to apply multiple languages, yet, is ranked behind the language-specific models~\cite{mulki2018tw}. We introduce a framework that uses bidirectional LSTM with attention and multi-label focal loss, which achieves the best score only using a single model on 7 of the 9 metrics on three different languages of the SemEval emotion classification competition~\cite{SemEval2018Task1}. 

\subsection{Contributions}
Our contributions are as follows:
\begin{enumerate}
    \item First time in the literature to the best of our knowledge, we introduce a multi-label emotion classification method that is capable of producing uniformly high classification performance on 100 different languages using a single model. Our method can readily be adapted to the cross-lingual platforms such as Amazon without using any language detection component. We make our model publicly available\footnote{\label{fn:github}\href{https://github.com/selimfirat/multilingual-sentiment-analysis}{https://github.com/selimfirat/multilingual-sentiment-analysis}} to facilitate reproducibility and further research.
    \item We introduce a dynamic weighting method to remedy the class imbalance that is an inherent problem in multi-label classification with adaptive loss weights as training progress, unlike the previous static weighting methods~\cite{aurelio2019learning,cui2019class}. We demonstrate the significant performance gains compared to the previous weighting methods and our method performs no worse than the uniform weighting, i.e., no weighting, for none of the hyperparameter choices. Our dynamic weighting method can be readily extended to other losses as we show through a remark.
    \item We derive a method to maximize macro-f1 with class specific threshold choices, which reduces the time complexity from exponential to linear.
    \item Our method can readily be adapted to pretrained models that will be available in future, non-supported languages and different models, which we show through remarks.
    \item We adapt focal loss to our multi-label emotion classification framework from the single-label object recognition literature and we show performance improvements obtained via the focal loss~\cite{lin2017focal}.
    \item Through an extensive set of experiments, we show that our model achieves the best scores in 7 out of 9 metrics in the SemEval emotion classification competition for Arabic, English and Spanish via a single model~\cite{SemEval2018Task1}.
    \item We analyze the cross-lingual performance of our method including the different training and test language pairs.
    \item We perform an ablation study to analyze the effect of the recurrent network and perform hyperparameter analysis for our dynamic weighting method.
\end{enumerate}

\subsection{Organization of this Paper}
The rest of the paper is organized as follows. In Section~\ref{sec:problem_description}, we describe the multi-label sentiment analysis task and show that it is the generalization of the binary and multi-class classification tasks. In Section~\ref{sec:methodology}, we introduce our deep metric learning based framework and the components to cope with the label imbalance. In Section~\ref{sec:experiments}, we demonstrate the performance improvements obtained by our proposed model compared to the state-of-the-art methods in the literature and the SemEval~\cite{SemEval2018Task1} emotion classification competition winners. In Section~\ref{sec:conclusion}, we conclude by providing remarks.

\section{Problem Description}
\label{sec:problem_description}
In this article, all vectors are column vectors and defined by boldfaced lowercase letters. All matrices and tensors are represented by boldfaced uppercase letters. $\card{\cdot}$ denotes cardinality, i.e., the number of elements, of set $\cdot$.

We aim to predict the labels of the given text in multi-label framework through our network $\F$. We receive training data $\P=\{ (s_i, \vec{c}_i) \}_i^n$, where $s_i$ is the text of $i\th$ training instance, $n$ is the number of training instances, $\vec{c}_i = [c_{i,1} \, c_{i,2} \, ..., \, c_{i,w} ]^T$ is the label vector of the $i\th$ training instance, $w$ is the number of classes and  $c_{i,a}$, $a \in \{1, 2, ...w\}$, is defined by
\begin{align*}
    c_{i,a} = 
\begin{cases} 
1,  &\text{ if class $a$ is inferred} \\
0, & \text{  otherwise}.
\end{cases}
\end{align*}
To satisfy this decision function, we predict score $\hat{c}_{i,a}$ for the target sentence $s_i$ via our network $\F$ as:
\begin{equation}
\hat{c}_{i,a} = \F(s_i) = p(c_{i,a}=1\mid s_i).
\label{eq:problem_class_label}
\end{equation}
\begin{remark}
Multi-class classification is a generalization of multi-class and binary classification tasks. For both, we have only one active label, i.e., $\sum_{a=1}^w c_{i,a} = 1$, $\forall i \in \{1, 2, ..., n \}$. The number of classes $w=2$ and $w>2$ for binary and multi-label classification, respectively. Since we formulate the problem as a multi-label classification, our framework is applicable to binary classification, multi-class classification and multi-label classification settings.
\end{remark}
\section{Methodology}
\label{sec:methodology}
In this section, we first describe the language modeling and recurrent modeling with attention for the multi-label classification. We then introduce our multi-label adaptation of focal loss and our dynamic weighting method. Lastly, we derive a method to select thresholds by maximizing macro-f1 within linear time complexity. Fig.~\ref{fig:lstm} illustrates the overall structure of our methodology.
\begin{figure}[tbp!]
    \centering
    \includegraphics[width=\linewidth]{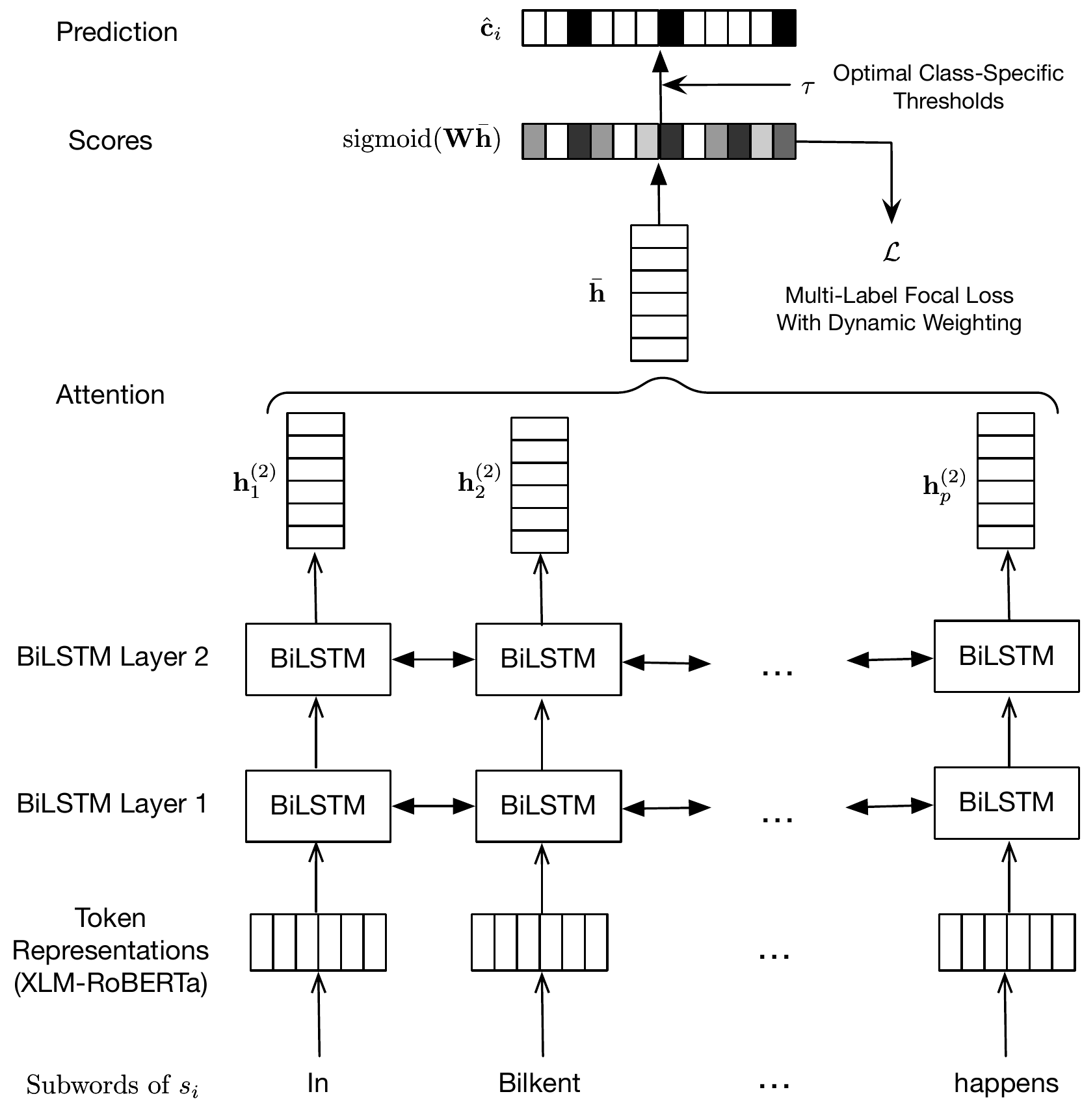}
    \caption{The overall structure of our model.}
    \label{fig:lstm}
\end{figure}

\subsection{Deep Multilingual Language Modeling}
Here, we describe our language modeling approach using XLM-RoBERTa~\cite{conneau2019unsupervised}.

Traditional approaches such as well known Bag-of-Words fail to generalize to the unseen data due to the sparsity of the language~\cite{collobert2011natural}. Early word embedding-based methods, such as the well-known word2vec~\cite{mikolov2013distributed}, based approaches have been used to cope with this problem via learning a vector for each word in a large vocabulary exploiting semantic relations between words~\cite{collobert2011natural}. However, these methods assign a single vector to each word regardless of the context of the target sentence. Recently, language models such as BERT have achieved outstanding results on various tasks~\cite{BERT}. These language models assign context-dependent vectors to each token in the target space instead of assigning a fixed vector. These models are trained using large corpora in an unsupervised setting. However, these models contain millions of parameters and it is not reasonable to finetune them on a small corpus. Thus, we use feature-vectors extracted from the pretrained model for each text instead of directly finetuning the pretrained model.

As shown in Fig.~\ref{fig:lstm}, we use XLM-RoBERTa pretrained tokenizer and pretrained model~\cite{conneau2019unsupervised}. XLM-RoBERTa is pretrained on CommonCrawl corpora of 100 different languages. We first tokenize the input sentence $s_i$ into subword units via byte-pair encoding using Sentencepiece Tokenizer~\cite{kudo2018sentencepiece}. We convert the sentence $s_i$ to $\X^{(i)} \in \real^{m \times d_i}$ by using the hidden state vectors of the pretrained language model, where $m$ is the embedding vector length and $d_i$ is the number of tokens in the sentence $s_i$. We obtain an embedding vector for each token in the sentence, i.e., $\X^{(i)} = [\x_1 \, ..., \, \x_{d_i}]$, where $\x_{j} \in \real^m \, , \forall j \in \{ 1, 2, ..., d_i\}$.


\begin{remark}
Our model can be adapted to other languages since we tokenize via byte-pair encoding and convert to features without applying any language-dependent preprocessing. For the languages that XLM-RoBERTa does not support, one can directly use any other pretrained model that supports the target language. We show the cross-lingual performance of our method in Section~\ref{sec:cross_lingual_experiments}.
\end{remark}


\subsection{Temporal Modeling of Sentence via Recurrent Networks}
Here, we describe our recurrent modeling for the multi-label emotion classification using the frozen features via the language modeling network.

We are given a sequence of token embeddings $\X^{(i)} \in d_i \times m$ for the sentence $s_i$, where $d_i$ is the number of tokens in the sentence $s_i$ and $m$ is the embedding size. $\x_k \in \real^m$ indicates the embedding of the $k\th$ token.

As shown in Fig.~\ref{fig:lstm}, we use bidirectional RNNs to incorporate both the forward and the backward information of the sequence. Through the RNN we process the variable length sequences. We employ deep networks, where the number of layers is $u$. For timestep $t$ and $k\th$ layer, we utilize $\overrightarrow{\h}\k_t$ and $\overleftarrow{\h}\k_t$ notations to define forward and backward RNNs, respectively. We define $k\th$ layer of the forward RNN that uses Elman's formulation~\cite{elman} as:
\begin{align*}
    \overrightarrow{\h}\k_t = \mathrm{tanh}(\W\k_{\mathrm{hh}} \overrightarrow{\h}\k_{t-1} + \W\k_{\mathrm{hx}} \overrightarrow{\h}^{(k-1)}_t + \b\k),
\end{align*}
where $\overrightarrow{\h}^{(0)}_t = \x_t$ for $t \in \{1,2, ...,d_i\}$ $\overrightarrow{\h}\k_0 \sim \mathcal{N}(0, 0.01)$, $\vec{b}\k$ is the bias term to be learned and $\W_{\mathrm{hh}}\k$, $\W_{\mathrm{hx}}\k$ are the weights to be learned. We also define the backward RNN's hidden state $\overleftarrow{\h}\k_t$ for $k\th$ layer by feeding the reversed input to the RNN, i.e.,
\begin{align*}
    \overleftarrow{\h}\k_t = \mathrm{tanh}(\V\k_{\mathrm{hh}} \overleftarrow{\h}\k_{t+1} + \V\k_{\mathrm{hx}} \overleftarrow{\h}^{(k-1)}_t + \vec{c}\k),
\end{align*}
where $\overleftarrow{\h}^{(0)}_t = \x_{d_i-t+1}$, $\overleftarrow{\h}\k_{d_i} \sim \mathcal{N}(0, 0.01)$, $\vec{c}\k$ is the bias term to be learned and $\V_{\mathrm{hh}}\k$, $\V_{\mathrm{hx}}\k$ are the weights to be learned. 

\begin{remark}
We extend our framework to the LSTM~\cite{hochreiter1997long} due to its success in capturing complex temporal relations. We feed the input sentence embedding $\X^{(i)}$ to the LSTM instead of the RNN as:
\begin{align*}
\vec{z}_t\k &= \mathrm{tanh}(\vec{W}\k_z \overrightarrow{\h}_t^{(k-1)} + \vec{V}\k_z \overrightarrow{\vec{h}}\k_{t-1} + \vec{b}_z\k) \\
\vec{s}_t\k &= \mathrm{sigmoid}(\vec{W}\k_s \overrightarrow{\h}_t^{(k-1)} + \vec{V}\k_s \overrightarrow{\vec{h}}\k_{t-1} + \vec{b}_s\k) \\
\vec{f}_t\k &= \mathrm{sigmoid}(\vec{W}\k_f \overrightarrow{\h}_t^{(k-1)} + \vec{V}\k_f \overrightarrow{\vec{h}}\k_{t-1} + \vec{b}\k_f) \\
\vec{c}_t\k &= \vec{s}_t\k \odot \vec{z}_t\k + \vec{f}_t\k \odot \vec{c}_{t-1}\k \\
\vec{o}_t\k &= \mathrm{sigmoid}(\vec{W}\k_o\overrightarrow{\h}_t^{(k-1)}+\vec{R}\k_o \overrightarrow{\vec{h}}_{t-1}\k+ \vec{b}\k_o) \\
\overrightarrow{\vec{h}}_{t}\k &= \vec{o}_t\k \odot \mathrm{tanh}(\vec{c}_t\k), 
\end{align*}
where $\overrightarrow{\h}_t^{(0)}=\x_t$, $\overrightarrow{\h}_0\k \sim \mathcal{N}(0, 0.01)$, $\vec{c}_t\k \in \real^m$ is the cell state vector, $\vec{h}_t\k \in \real^w$ is the hidden state vector, for the $t$\textsuperscript{th} LSTM unit. $\vec{s}_t\k$, $\vec{f}_t\k$ and $\vec{o}_t\k$ are the input, forget and output gates, respectively. $\odot$ is the operation for elementwise multiplication. $\vec{W}$, $\vec{V}$, and $\vec{b}$ with the subscripts $z$, $s$, $f$, and $o$ are the parameters of the LSTM unit to be learned. We also define the backward LSTM via $\overleftarrow{\h}_t\k$ by reversing the input order for each layer of the LSTM, as in RNNs.
\end{remark}

We concatenate the hidden states of the backward  and the forward RNN of $k\th$ layer at time $t$ as:
\begin{align*}
    \h\k_t = \begin{bmatrix}
    \overrightarrow{\h}\k_t \\ \overleftarrow{\h}\k_t
    \end{bmatrix}.
\end{align*}
We then apply attention to the hidden states by weighing each timestep's hidden state with a single parameter as~\cite{yang2016hierarchical}:
\begin{align*}
    \Bar{\h} = \sum_{t=1}^p \beta_t \h_t^{(u)},
\end{align*}
where $p$ is the sequence length and $\beta_t = \frac{\exp \left ( \h_t \vec{s} \right )}{\sum_{i=1}^{p} \exp \left (\h_i \vec{s}\right )}$ for the timestep $t \in \{1,2,...,p\}$. Lastly, we use linear layer and sigmoid activation to convert our predictions to the labels as:
\begin{equation}
    \vec{r} = \mathrm{sigmoid}(\vec{W} \Bar{\h}), 
\label{eq:sigmoid_output}
\end{equation}
where $\vec{r} \in \real^s$ and $s$ is the number of the target labels of the task. 

\begin{remark}
We use sigmoid activation at the final layer instead of softmax since the softmax assumes independence between labels, whereas in our case the labels are non-independent due to Plutchik's theory~\cite{plutchik1980general} as we describe in Section~\ref{sec:introduction}.
\end{remark}

\subsection{Multi-Label Focal Loss}
\label{sec:loss}

In this section, we adapt the focal loss for our multi-label framework from the single-label object recognition literature~\cite{lin2017focal}. We define $p_{i, a}$ for notational convenience as the following:
\begin{align*}
    p_{i, a} = 
    \begin{cases} 
        r_{i,a}, & \mathrm{if } \,  c_{i,a}=1 \\
        1 - r_{i,a}, & \mathrm{otherwise},
    \end{cases}
\end{align*}
where $r_{i,a}$ is the sigmoid output for class $a$ and the instance $i$, which is obtained via~\eqref{eq:sigmoid_output}.
Then, the standard cross entropy loss for instance $i$ and class $a$ becomes $- \log p_{i,a}$.

Focal loss has been proposed to overcome the class imbalance problem in object recognition, which extends the cross entropy loss~\cite{lin2017focal}. The focal loss focuses training of the hard instances instead of the well-classified ones as:
\begin{align*}
    l_{i,a} = -(1 - p_{i,a})^\gamma \log p_{i,a},
\end{align*}
where $\gamma \in \real$ is a tunable parameter  and $\gamma \geq 0$. Notice that the focal loss extends the cross entropy loss by multiplying with $(1-p_{i,a})^\gamma$. The concept of focusing on hard samples is similar to the mining of the hard instances in deep metric learning~\cite{harwood2017smart}.

We convert the loss into a scalar by taking weighted sum w.r.t. the classes and averaging w.r.t. the instances in the batch as:
\begin{equation}
    \L = \frac{1}{b} \sum_{i=1}^b \sum_{a=1}^w \alpha_{t,a} l_{i,a},
    \label{eq:loss}
\end{equation}
such that $\sum_{a=1}^w \alpha_{t,a} = 1$, $t$ is the index of the mini-batch iteration, $b$ is the batch size and $\alpha_{t,a}$ is the weight of the class $a$ at the mini-batch iteration $t$. We can assign equal weights to by setting $\alpha_{t,a}=\frac{1}{w}$ for each class $a$ and for all mini-batch iteration $t$. In the following section, we introduce a novel method for choosing $\alpha_{t,a}$ to remedy the class imbalance.

\subsection{Novel Dynamic Weighting Method for Label Imbalance}
Here, we introduce our dynamic weighting method to improve the imbalanced multi-label classification, which can also be applied to the single-label problems and other loss functions as we show through remarks.

Although focal loss improves the imbalanced classification performance, there is still plenty of room for improvement. For instances, \cite{lin2017focal} uses alpha balanced variant of the focal loss in practice, where they choose inverse frequency of the class as in the imbalanced classification. \cite{cui2019class} also extends focal loss by class volume based formulation and introduces another hyperparameter. We introduce a method to equalize the losses from all classes in the problem. Our goal is to define weights in a way that each class has equal contribution to the loss, i.e.,
\begin{align*}
    \sum_{i=1}^{\card{\P}} \alpha_{t,1} l_{i,1} = \sum_{i=1}^{\card{\P}} \alpha_{t,2} l_{i,2} = ... = \sum_{i=1}^{\card{\P}} \alpha_{t,w} l_{i,w} ,
\end{align*}
where $\P$ is the training data. 

Finding the exact value for $\alpha_{t,a}$ is intractable since model parameters change after each mini-batch and we train in mini-batch setting. Thus, we track the losses by exponentially smoothed approximation $\omega_{t, a}$ at mini-batch iteration $t$ and class $a$, which is given by
\begin{align*}
    \omega_{t, a} = \sum_{i=1}^b \kappa l_{i,a} + (1-\kappa) \omega_{t, a-1} ,
\end{align*}
where $\kappa$ is the smoothing hyperparameter to be tuned and $\omega_{1, a}=\frac{1}{w}, \forall a \in \{1,2,...,w\}$. We invert $\omega_{t,a}$ and introduce a very small $\epsilon$ term if there appears no loss for any class for numerical stability of our method since we may get $0$ loss for some classes, as the following:
\begin{align*}
    \phi_{t,a} = \frac{1}{\epsilon + \omega_{t, a}},
\end{align*}
where we set $\epsilon=1 \times 10^{-5}$. Using $\phi_{t,a}$, we define $\alpha_{t,a}$ in~\eqref{eq:loss} as:
\begin{equation}
    \alpha_{t,a} = \frac{\phi_{t,a}}{\sum_{u=1}^w \phi_{t,u}}.
    \label{eq:dynamic_weighting}
\end{equation}
Through~\eqref{eq:dynamic_weighting}, we gurantee that the weights sum up to $1$ for any mini-batch iteration. We set the gradient w.r.t. the network parameters $\Theta$ to zero, i.e., $\nabla_\Theta \alpha_{t,a}=0, \, \forall t\in \{1,2,...\}, a\in\{1,2,...,w\}$. We balance the loss contribution from the classes by using $\alpha_{t,a}$ in~\eqref{eq:loss}.

\begin{remark}
Notice that the weights of our dynamic weighting method change over time w.r.t. the hardness of the instances among classes, unlike the previous methods in the literature~\cite{cui2019class,aurelio2019learning}.
\end{remark}

\begin{remark}
Dynamic weighting is loss-agnostic, thus, can readily be adapted to the other alternative losses. For example, we can adapt it into the cross entropy loss by setting $l_{i,a}=-\log p_{i,a}$ and directly use~\eqref{eq:loss}.
\end{remark}
\begin{remark}
Dynamic weighting method can also be applied to the single label problems without any change since the multi-label problem is a generalization of the single-label variant.
\end{remark}
\subsection{Class Specific Thresholding via Macro-F1 Maximization}
\label{sec:class_specific_thresholding}
We derive a macro-f1 maximization method by choosing the optimal class specific threshold within linear time complexity.

We have the model output $\vec{\hat{c}_i} = \vec{r}_i$, which is our score vector, that is to be thresholded to make a prediction. We have a class specific score $\hat{c}_{i,a}$ for class $a$. We expect high scores for the inferred classes and low scores for the non-inferred classes. We split a part of the validation set as the thresholding set and then use it to choose the optimal threshold hat maximizes the macro-f1 score. We concatenate the scores of all instances in the thresholding set $\mathcal{T}$ into $\hat{\vec{c}}_a \in \real
^{\card{\mathcal{T}}}$ as:
\begin{align*}
    \vec{\hat{c}}_a = \begin{bmatrix}
    \hat{c}_{1,a} &
    \hat{c}_{2,a} &
    ... &
    \hat{c}_{\card{\mathcal{T}},a} 
    \end{bmatrix}^T .
\end{align*}
Our aim is to find a threshold vector $\vec{\tau} \in \real^w$ given by
\begin{align*}
    \vec{\tau} = \begin{bmatrix}
    \tau_1 & \tau_2 & ... & \tau_w
    \end{bmatrix}^T .
\end{align*}
We select the optimal threshold for each class that maximizes the macro-f1 score, which is the F1 score calculated for each class and averaged among the classes. F1 is the harmonic mean of the precision and recall for a class $a$, i.e.,
\begin{align*}
    \mathrm{F1}(\vec{c}_a, \hat{\vec{c}_a}) =
    \frac{2 \times \mathrm{precision} \times \mathrm{recall}}{\mathrm{precision} + \mathrm{recall}}
\end{align*}
s.t.
\begin{align*}
    \mathrm{precision}(\vec{c}_a, \hat{\vec{c}_a}) &= \frac{\mathrm{TP}}{\mathrm{TP} + \mathrm{FP}}\\
    \mathrm{recall}(\vec{c}_a, \hat{\vec{c}_a}) &= \frac{\mathrm{TP}}{\mathrm{TP} + \mathrm{FN}},
\end{align*}
where $\mathrm{TP}$, $\mathrm{FP}$ and $\mathrm{FN}$ are the number of true positives, false positives and false negatives, respectively. 

\begin{definition}
The $\argmax_{\rho} \cdot$ function returns the minimum value of the $\rho$ that maximizes the proceeding function $\cdot$.
\end{definition}

As shown in Fig.~\ref{fig:lstm}, we select the threshold vector $\vec{\tau}$ to threshold the model scores by maximizing the $\mathrm{MacroF1}$ function on the validation set as:
\begin{equation}
    \vec{\tau} = \argmax_{\vec{\tau}} \mathrm{MacroF1}(\delta(\hat{\vec{c}_a} \geq \tau_a), \vec{c}_a),
    \label{eq:argmax_macrof1}
\end{equation}
where $\delta(\cdot)$ function returns the same sized vector with its input, which outputs 1 for the dimensions that satisfy inequality and 0 for the rest. Directly optimizing~\eqref{eq:argmax_macrof1} via grid search becomes infeasible as the number of classes increases and it may not be possible to find the optimal value since the time complexity is in $\Theta ( T^w )$,\footnote{$\Theta(w(n))$ denotes the set of all $r(n)$, where $a_1w(n) \leq r(n) \leq a_2w(n)$, $\forall n > n_0$ for $n\in \mathbb{Z}^+$ such that there exist positive integers $a_1$, $a_2$, and $n_0$.} where $T$ is the number of elements to be tried and $w$ is the number of classes. Thus, we introduce the following lemma:
\begin{lemma}
$\vec{\tau}$ is equivalent to $ \begin{bmatrix} \tau_1 & \tau_2 & ... & \tau_w \end{bmatrix}^T $ such that
\begin{equation}
\tau_a = \argmax_{\tau_a} \mathrm{F1}(\delta(\hat{\vec{c}_a} \geq \tau_a), \vec{c}_a),
\label{eq:argmax_linear}
\end{equation}
for all $a \in \{1,2,...,w\}$.
\label{lemma:argmax_f1}
\end{lemma}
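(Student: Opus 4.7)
The plan is to exploit the separability of the macro-F1 objective across classes. The function $\mathrm{MacroF1}$ is defined as the arithmetic mean of the per-class F1 scores, so I would first write it explicitly as
\begin{align*}
\mathrm{MacroF1}\bigl(\delta(\hat{\vec{c}} \geq \vec{\tau}), \vec{c}\bigr) = \frac{1}{w} \sum_{a=1}^{w} \mathrm{F1}\bigl(\delta(\hat{\vec{c}}_a \geq \tau_a), \vec{c}_a\bigr).
\end{align*}
The crucial observation is that the $a\th$ summand depends only on $\tau_a$ (through the thresholding of $\hat{\vec{c}}_a$) and on the ground-truth labels $\vec{c}_a$, and is therefore functionally independent of $\tau_b$ for $b \neq a$. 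This is because precision and recall for class $a$ only involve $\mathrm{TP}$, $\mathrm{FP}$, $\mathrm{FN}$ counts restricted to class $a$, so no cross-class interaction is introduced in the macro averaging (unlike, say, micro-F1, where a single confusion table aggregates across classes).

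Next, I would invoke the standard separability argument: for any function of the form $g(\tau_1,\ldots,\tau_w) = \sum_a g_a(\tau_a)$ with each $\tau_a$ ranging over its own independent domain, we have
\begin{align*}
\max_{\vec{\tau}} \sum_{a=1}^w g_a(\tau_a) = \sum_{a=1}^w \max_{\tau_a} g_a(\tau_a),
\end{align*}
and the joint argmax is realized coordinate-wise. Applying this to $g_a(\tau_a) = \frac{1}{w}\mathrm{F1}(\delta(\hat{\vec{c}}_a \geq \tau_a), \vec{c}_a)$ gives that the threshold vector maximizing macro-F1 is exactly the vector whose $a\th$ coordinate maximizes the per-class F1 in~\eqref{eq:argmax_linear}. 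Because the constant factor $1/w$ does not affect the argmax, it can be dropped.

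The one subtle point I would be careful about is the tie-breaking convention fixed in the preceding Definition, which resolves $\argmax$ by selecting the minimum maximizer. Since we are choosing each $\tau_a$ independently from its own one-dimensional set of candidate values and applying the same convention both to the per-class maximization and to the joint one, the minimum-maximizer rule also decouples across coordinates, so the coordinate-wise minima coincide with the minimum of the joint argmax set in the product order used implicitly for~\eqref{eq:argmax_macrof1}. I expect this to be the only mildly delicate step; the separability argument itself is essentially an identity. As a corollary I would note the complexity reduction: evaluating~\eqref{eq:argmax_linear} independently for each class reduces the grid search from $\Theta(T^w)$ to $\Theta(Tw)$, which is linear in $w$ as claimed.
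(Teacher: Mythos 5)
Your proposal is correct and follows essentially the same route as the paper's proof: expand $\mathrm{MacroF1}$ as the arithmetic mean of per-class F1 scores and observe that each summand depends only on its own threshold $\tau_a$, so the joint $\argmax$ decouples coordinate-wise. You actually spell out the separability identity and the minimum-maximizer tie-breaking more carefully than the paper does, but the underlying argument is identical.
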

\begin{proof}[Proof of Lemma~\ref{lemma:argmax_f1}]
We prove the lemma by deriving~\eqref{eq:argmax_linear} from~\eqref{eq:argmax_macrof1}. Initially, we have
    \begin{align*}
    \vec{\tau} &= \argmax_{\vec{\tau}} \mathrm{MacroF1}(\delta(\hat{\vec{c}_a} \geq \tau_a), \vec{c}_a) \\
    &= \argmax_{\vec{\tau}} \frac{1}{w} \sum_{a=1}^w \mathrm{F1}(\delta(\hat{\vec{c}_a} \geq \tau_a), \vec{c}_a). \\
\end{align*}
Since the class specific thresholds in $\vec{\tau}$ are independent, we separate the thresholds into different $\argmax$ functions as:
\begin{align*}
    \tau_a &= \argmax_{\tau_a} \mathrm{F1}(\delta(\hat{\vec{c}_a} \geq \tau_a), \vec{c}_a) .
\end{align*}
This concludes the proof of Lemma~\ref{lemma:argmax_f1}.
\end{proof}
Using Lemma~\ref{lemma:argmax_f1}, the time complexity becomes linear w.r.t. the number of classes, i.e., $\Theta (T w)$. Thus, we calculate the threshold vector $\vec{\tau}$ using~\eqref{eq:argmax_linear}.

\begin{remark}
Notice that it is not reasonable to choose threshold using the training set since the model already memorizes it and unavoidably performs biased scoring for the training data. This is why we use the unseen thresholding set.
\end{remark}
\section{Experiments}
\label{sec:experiments}
In this section, we first describe the datasets, the evaluation methodology and the implementation details. We then compare our method with the first ranking methods in the SemEval emotion classification competition~\cite{SemEval2018Task1} and the state-of-the-art methods. We then analyze the performance of our method via cross-lingual experiments. Later, we demonstrate performance gains obtained via our dynamic weighting method and analyze its hyperparameter. Finally, we present the individual class performances of our method and demonstrate the performance gains obtained by the components of our method via an ablation study.

\subsection{Datasets}
\begin{figure}[htbp!]
    \centering
    \includegraphics{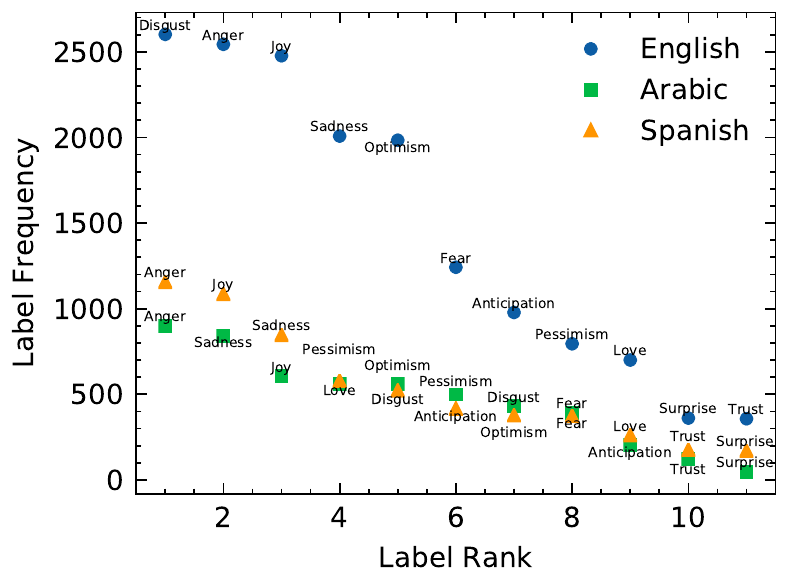}
    \caption{Number of label occurrences vs. rank plot of the labels in datasets that demonstrate the class imbalance.}
    \label{fig:datasets_zipf}
\end{figure}

We use datasets in three different languages from the SemEval competition~\cite{SemEval2018Task1}: SemEval-Arabic, SemEval-English and SemEval-Spanish. For simplicity, we refer SemEval-Arabic, SemEval-English and SemEval-Spanish datasets as Arabic, English and Spanish, respectively. Since the datasets are in multi-label setting, the instances contain zero or more labels among the 11 labels in the dataset. Fig.~\ref{fig:datasets_zipf} demonstrates the class imbalance via the number of occurrence vs. rank plot of the labels in the datasets. We use the splits of the SemEval emotion classification competition~\cite{SemEval2018Task1}. Arabic dataset has a total of 4,381 instances consisting of 160,206 tokens and split into 3,561 training, 679 validation and 2,854 test instances. The English dataset has 10,983 instances consisting of 338,763 tokens and split into 6,838 training, 886 validation and 3,259 test instances. The Spanish dataset has 7,094 instances consisting of 176,650 tokens and split into 2,278 training, 585 validation and 1,518 test instances.
\begin{table*}[t!]
\centering
\begin{tabular}{lccccccccc}
& \multicolumn{3}{c}{Arabic} & \multicolumn{3}{c}{English} & \multicolumn{3}{c}{Spanish} \\
\cmidrule(lr){2-4}
\cmidrule(lr){5-7}
\cmidrule(lr){8-10}
Method & Macro-F1 & Micro-F1 & Jaccard & Macro-F1 & Micro-F1 & Jaccard & Macro-F1 & Micro-F1 & Jaccard \\ \midrule
Ours & {\bf 55.0} & {\bf 66.1} & {\bf 53.4} & {\bf 58.4} & 69.6 & 57.6 & {\bf 53.0} & {\bf 60.6} & {\bf 48.6} \\\midrule
Ours-SL & 51.3 & 57.5 & 44.3 & 56.4 & 68.7 & 56.5 & 50.5 & 56.6 & 45.3 \\
Tw-StAR~\cite{mulki2018tw} & 44.6 & 59.7 & 46.5 & 45.2 & 60.7 & 48.1 & 39.2 & 52.0 & 43.8 \\
FastText~\cite{fasttext} & 35.3 & 40.2 & 25.5 & 35.0 & 39.9 & 25.5 & 27.0 & 31.9 & 20.6 \\
CA-GRU~\cite{samy2018context} & 49.5 & \underline{64.8} & \underline{53.2} & - & - & - & - & - & - \\
HEF-DF~\cite{alswaidan2020hybrid} & \underline{50.2} & 63.1 & 51.2 & - & - & - & - & - & - \\
EMA~\cite{badaro2018ema} & 46.1 & 61.8 & 48.9 & - & - & - & - & - & - \\
PARTNA & 47.5 & 60.8 & 48.4 & - & - & - & - & - & - \\
NTUA-SLP~\cite{baziotis2018ntua} & - & - & - & 52.8 & {\bf 70.1} & {\bf 58.8} & - & - & - \\
psyML~\cite{gee2018psyml} & - & - & - & \underline{57.4} & 69.7 & 57.4 & - & - & - \\
NVIDIA~\cite{kant2018practical} & - & - & - & 56.1 & 69.0 & 57.7 & - & - & - \\
DeepMoji~\cite{deepmoji} & - & - & - & 55.9 & 65.7 & 52.8 & - & - & - \\
ELiRF-UPV~\cite{gonzalez2019elirf} & - & - & - & - & - & - & \underline{44.0} & 53.5 & 45.8 \\
MILAB\_SNU & - & - & - & - & - & - & 40.7 & \underline{55.8} & \underline{46.9} \\
\bottomrule
\end{tabular}
\caption{Comparison of our method with the state-of-the-art methods in the literature and the winners in SemEval-2018 emotion classification competition~\cite{SemEval2018Task1} with the introduced method. The previously reported state-of-the-art results are underlined. The current state-of-the-art results are boldfaced.}
\label{tab:sota}
\end{table*}

\subsection{Evaluation Methodology and Implementation Details}
\label{sec:evaluation_methodology}
We use macro averaged F1 (macro-f1), micro averaged F1 (micro-f1) and jaccard index, which are the metrics used in the SemEval competition~\cite{SemEval2018Task1}. For fairness, we optimize our network and the \textit{Deepmoji}~\cite{deepmoji} baseline by using Tree Parzen Estimator of the Optuna library~\cite{akiba2019optuna} and choose the model with the largest validation macro-f1 score among 100 trials. For the \textit{FastText}~\cite{fasttext} baseline, we use its own hyperparameter optimization module with 130 different trials for each of the languages, i.e., 30 more trials than our optimization for our model and the \textit{Deepmoji} baseline. The methods in the SemEval emotion classification competition have also followed similar approaches, e.g., \textit{EMA}~\cite{badaro2018ema} performs a grid search, \textit{NVIDIA}~\cite{kant2018practical} and \textit{NTUA-SLP}~\cite{baziotis2018ntua} employ Bayesian optimization in dimensional space of all the possible values. 

We train our model via the Adam~\cite{kingma2014adam} optimizer. We use ekphrasis\footnote{https://github.com/cbaziotis/ekphrasis/tree/master/ekphrasis} preprocessing library to perform language-independent preprocessing of social cues such as username normalization. We use weight decay and early stopping. We stop the training until 10 epochs are exceeded without any validation F1-Macro improvements. 

\subsection{Comparison with the State-of-the-Art}
\label{sec:comparison_sota}
Here, we compare our model with the state-of-the-art methods and the best models in SemEval-2018 competition in Arabic, English and Spanish languages.

To create our best model, we combine the Arabic, English and Spanish data by combining their training and validation sets. We then train our model on the combined data using the methodology described in Section~\ref{sec:evaluation_methodology}.

We use 12 different baselines to compare our method and demonstrate its effectiveness, most of which are the highest performing contenders in SemEval-2018 emotion classification competition. \textit{NTUA-SLP}~\cite{baziotis2018ntua} ranked 1st on jaccard and micro-f1 metrics for English by using a pretrained Bi-LSTM with a multi-layer self-attention mechanism. They use word2vec embeddings that are trained on 550 million tweets. The best micro-f1 score for English is achieved by \textit{psyML}~\cite{gee2018psyml}, which uses a very similar Bi-LSTM with self-attention model to \textit{NTUA-SLP}, except they utilize hierarchical clustering to group correlated emotions together and train the same model incrementally for emotions within the same cluster. \textit{NVIDIA}~\cite{kant2018practical} trains an attention-based transformer network on large scale data and finetune this model on the training set for SemEval-English before testing it, obtaining results on par with those in the competition ranking. \textit{DeepMoji} is a distant supervision based LSTM architecture and it obtains the state-of-the-art performance on many sentiment related tasks~\cite{deepmoji}. They convert multi-label instances into seperate binary tasks. We report the results of their chain-thaw approach on the English dataset. 
For Arabic, \textit{EMA}~\cite{badaro2018ema} (1st place in jaccard and micro-f1, 2nd place in macro-f1) and \textit{PARTNA} (1st place in macro-f1, 2nd place in jaccard and micro-f1) achieves the highest two ranks. \textit{EMA} employs AraVec embeddings~\cite{aravec} as features into a support vector classifier (SVC) with $L_1$ regularization. \textit{PARTNA} uses a similar support vector based model except using an additional Arabic stemmer designed for handling tweets~\cite{SemEval2018Task1}. There are also studies that perform well but are not in SemEval rankings. Among these, \textit{CA-GRU}~\cite{samy2018context} uses context information, the topic of the text in this case, as a feature by first feeding the text to a topic-detection model to obtain a vector of probability distributions over topics. \textit{HEF-DF}~\cite{alswaidan2020hybrid} is a simple neural network hybrid model obtained from concatenating human engineered (i.e. handpicking features that represent syntactical and semantical significance) and deep features (i.e. using combinations of embeddings).
As with Arabic, two models exist for the 1st place in a metric for Spanish: \textit{MILAB\_SNU} (1st place in jaccard and micro-f1, 2nd place in macro-f1) and \textit{ELiRF-UPV}, which uses manually and automatically generated lexicon sand combines 1D CNNs with an LSTM to obtain the 1st place in macro-f1 metric~\cite{gonzalez2019elirf} (2nd place in jaccard and micro-f1). We also include \textit{Tw-StAR}~\cite{mulki2018tw} as a baseline to compare our method's multilingual performance with a standard model. \textit{Tw-StAR} uses binary relevance transformation strategy to extract term frequency-inverse document frequency (tf-idf) features for a linear support vector machine. They also experiment with combinations of 5 different preprocessing methods and reach the 3rd rank for both Arabic and Spanish datasets.
\textit{FastText} is a framework that can convert text into feature vectors by using a skipgram model, where each word is represented as a bag of n-grams~\cite{fasttext}. \textit{FastText} contains readily extracted vectors for 157 languages. We finetune these vectors for English, Arabic and Spanish and use these vectors on their respective SemEval datasets. 

Table~\ref{tab:sota} presents the results of our model compared to the state-of-the-art models and the competition-winners. The models that target only a single language perform significantly better compared to the multilingual models. The only exception is our model that is trained on three different language's training data combined, which obtains significantly better results compared to our single language (\textit{Ours-SL}) model with the same methodology and trained on each of these languages separately. Our method achieves the best score on all of the metrics in Arabic and Spanish languages. Our method achieves the best score in macro-f1 metric of the English language. In Arabic, our method achieves 4.8\% (absolute) macro-f1 improvement compared to the previous best model on macro-f1 score, which is \textit{HEF-DF}~\cite{alswaidan2020hybrid}. Our method obtains 2.3\% (absolute) micro-f1 and 0.2\% (absolute) jaccard score improvement compared to the previous best model \textit{CA-GRU}~\cite{samy2018context}. In English, our method achieves 1\% (absolute) macro-f1 improvement compared to the previous competition winner \textit{psyML}~\cite{gee2018psyml}. Our method performs 0.5\% (absolute) micro-f1 and 1.2\% (absolute) jaccard score compared to the competition winner \textit{NTUA-SLP}~\cite{baziotis2018ntua}. Note that although \textit{NTUA-SLP} achieves the best score on micro-f1 and jaccard metrics, it performs 5.6\% (absolute) macro-f1 less than our method. In Spanish, our method achieves 9.0\% (absolute) macro-f1 improvement compared to the previous best model \textit{ELiRF-UPV}~\cite{gonzalez2019elirf}. Our method also achieves 4.8\% (absolute) micro-f1 and 1.7\% (absolute) jaccard score improvement compared to the previous best model \textit{MILAB\_SNU}.

\subsection{Cross-Lingual Experiments}
\label{sec:cross_lingual_experiments}
In this section, we demonstrate the cross-lingual capability of our method using training and test data combinations of different languages.

Table~\ref{tab:cross_lingual} presents the results when a model is trained on combinations of the datasets of different languages from the SemEval competition~\cite{SemEval2018Task1}. For each row, we train the model using the combined training data of the languages in the first column and validate using the combined validation data. We then experiment on the test sets of the English (EN), Spanish (SP) and Arabic (AR) languages, separately. Note that the threshold and the best model is selected using the validation set of the combined data using the procedure we describe in Section~\ref{sec:evaluation_methodology}. Recall that we use only a single model, which is the model shown at the last row (AR + EN + SP) in the comparisons with the state-of-the-art in Section~\ref{sec:comparison_sota}.

\begin{table}[htbp!]
    \centering
    \begin{tabular}{lccc}
    & \multicolumn{3}{c}{Validation Data} \\

  \cmidrule(lr){2-4}
        Training Data & Arabic & English & Spanish  \\     \midrule
Arabic (SP) & 52.7 & 39.6 & 30.7 \\
English (EN) & 37.9 & 60.1 & 36.2 \\
Spanish (SP) & 35.2 & 46.4 & 52.3 \\
AR + EN & 52.5 & 61.1 & 39.6 \\
AR + SP & \textbf{57.9} & 47.5 & 52.7 \\
EN + SP & 44.9 & 60.5 & \textbf{53.9} \\
EN + AR + SP & 55.3 & \textbf{61.7} & 52.6 \\
\bottomrule
    \end{tabular}
    \caption{Experiment results when the model is trained on the combinations of the SemEval datasets and tested on the individual validation sets. The best results are boldfaced.}
    \label{tab:cross_lingual}
\end{table}

\begin{figure*}[t!] 
    \centering
  \subfloat[Only English data]{%
       \includegraphics[width=0.47\linewidth]{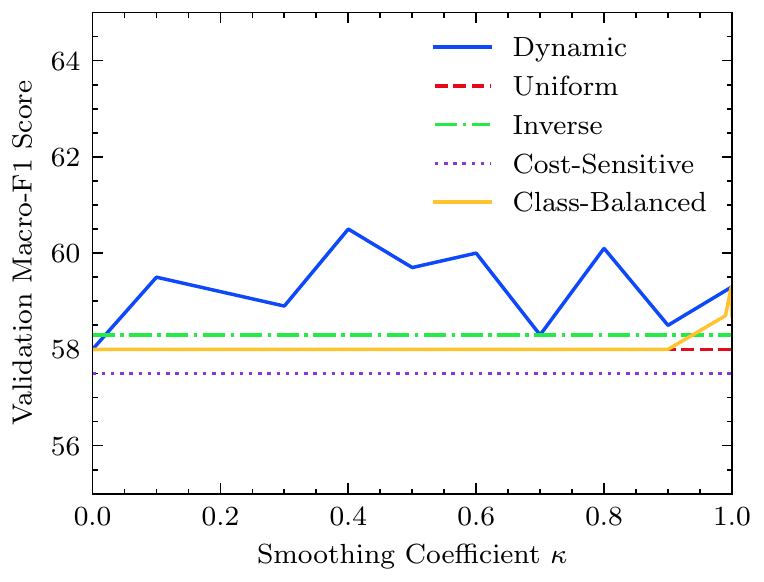}}
    \hfill
  \subfloat[Combined Arabic, English and Spanish data]{%
        \includegraphics[width=0.47\linewidth]{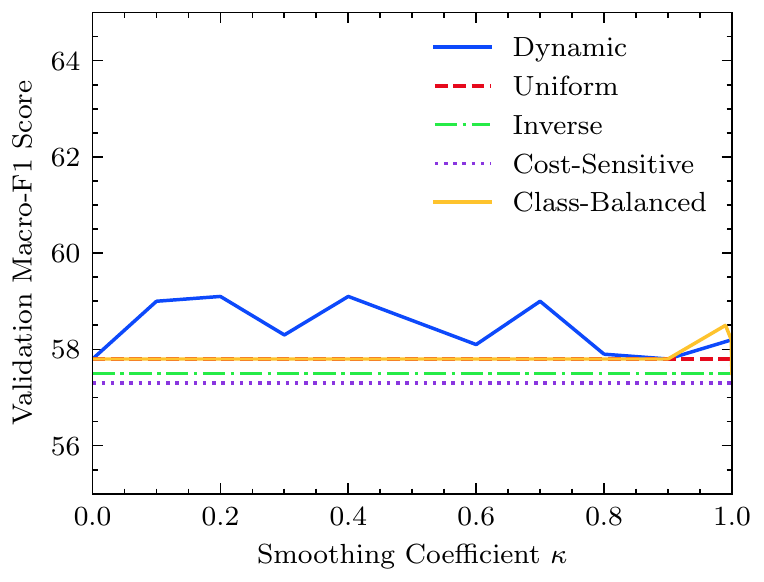}}
  \caption{Comparison of the weighting methods for imbalanced classification on the data from the SemEval emotion classification competition~\cite{SemEval2018Task1}. Figure is best viewed in color.}
  \label{fig:ablation_loss} 
\end{figure*}

The models trained on a single language perform the best on the training data's language, as expected. For instance, the model trained on English performs the best for the English test data. This is due to the semantic differences and the implicit biases in each dataset. The results clearly indicate that training with data from different languages significantly improves the performance of our model. For English test data, including Arabic to the English training data improves the model more than including Spanish. For Arabic test data, including Spanish to the Arabic training data improves the model more than including English. For Arabic test data, using Arabic and Spanish training data combined performs the best. For English test data, using data from all of the three languages performs the best. For Spanish, including English data to Spanish training data improves the model more than including Arabic. For Spanish test data, using English and Spanish training data combined performs the best and including Arabic to this data lowers the performance.

Our cross-lingual experimental results are consistent for the models that are trained on single language datasets with the semantic similarity atlas of the languages~\cite{csenel2018generating}. For example, English and Spanish are significantly more similar to each other than to the Arabic language. Among the models that are trained on a single language, English and Spanish training datasets score the best for each other's test data compared to the Arabic. For English test data, Spanish training data scores 7.0\% (absolute) macro-f1 more than the Arabic. For Spanish test data, the model trained on the English training data scores 5.5\% (absolute) macro-f1 more than the Arabic. For Arabic, which is closer to English than the Spanish in the similarity atlas~\cite{csenel2018generating}, training with English data results in 2.7\% (absolute) macro-f1 gain compared to training with the Spanish. Notice that the models perform promising even for the unseen languages, e.g., the model that is trained on English and Spanish data and tested on the unseen Arabic validation data perform with 3\% to 11\% (absolute) less macro-f1 score compared to the baselines and our best model trained on all of the three languages. For the model tested on the English data, which trained on the rest of the languages, perform 2\%  (absolute) macro-f1 better than the Tw-StAR baseline and  10.9\% (absolute) macro-f1 worse compared to the best English model trained on all of three languages. For the model tested on the Spanish validation data, which trained on the rest of the languages, the model performs 0.4\% (absolute) macro-f1 better than the Tw-StAR baseline and 13.4\%  (absolute) macro-f1 worse than our best model that is trained on all of the three languages. Note that these cross-lingual scores are obtained on the unseen validation sets of the datasets to prevent the test leak, unlike the baselines, where they are tested on the test set.

\subsection{Influence of Dynamic Weighting}
\label{sec:ablation_loss}
Here, we analyze the hyperparameter selection of our dynamic weighting method and compare it with the existing weighting methods that are proposed to remedy the class imbalance.

Fig.~\ref{fig:ablation_loss} illustrates the comparison of different weighting methods in the literature and our dynamic weighting method. We use the parameters of the best model except $\kappa$, which is the smoothing hyperparameter for dynamic weighting. For dynamic weighting method, we experiment with different $\kappa \in [0, 1]$ with 0.1 spacing. For class-balanced focal loss term, we additionally experimented with the $\beta \in \{0.99, 0.999, 0.9999\}$ values as in~\cite{cui2019class}. Note that $\beta$ is defined for $[0,1)$, thus, we did not experiment for $\beta=1$. We show the $\beta$ term of the class-balanced focal loss via the x-axis of Fig.~\ref{fig:ablation_loss}, too, which controls the growth rate of the weight with respect to the number of instances belonging to each class. We experiment with uniform weighting that assign equal importance to the losses from each class, i.e., $\alpha_{t,a}=\frac{1}{w}, \, \forall t\in \{1,2,...\}, a\in\{1,2,...w\}$. We also compare with the inverse loss, which is the inverse of the number of instances belonging to each class. Lastly, we compare with the cost-sensitive loss~\cite{aurelio2019learning}.

Our dynamic weighting method demonstrates significant performance improvement, i.e., $\approx$2.5\% (absolute) macro-f1 improvement compared to the uniform weighting and more improvements compared to the other methods on the only English data. The only exception is the class balanced weighting, for which our method achieves $\approx$1.2\% (absolute) macro-f1 improvement compared to the best of the class-balanced weigthing when $\beta=0.999$. On the combined data, the dynamic weighting achieves 1.3\% macro-f1 improvement when $\kappa=0.4$ compared to the uniform weighting and more improvements compared to the other methods. The only exception is the class balanced weighting, for which our method achieves 0.6\% (absolute) macro-f1 improvement when $\beta=0.99$.

Although there exist fluctuations w.r.t. $\kappa$ hyperparameter, it performs no worse than the default uniform weighting for any of the $\kappa$ values for both only English and combined Arabic, English and Spanish data. Notice that when $\kappa=0$, the dynamic weighting method is equivalent to the uniform weighting since the $\phi$ parameter is never updated. Our method achieves its best value at $\kappa=0.4$ for both datasets.

\subsection{Individual Class Performances}
In this section, we analyze the performance of our method for individual classes.

\begin{figure}[htbp!]
    \centering
    \includegraphics[height=6cm,keepaspectratio]{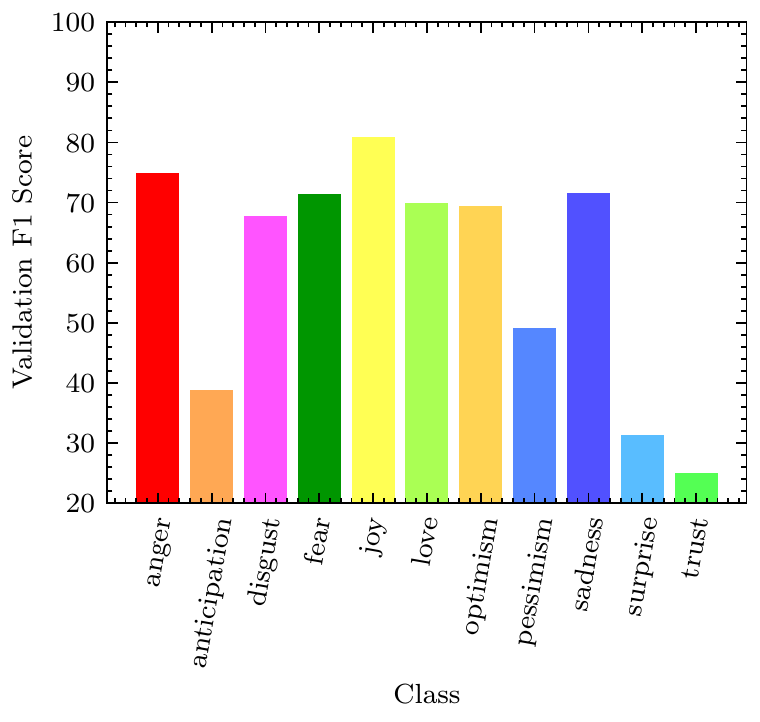}
    \caption{Per-class F1 scores of the validation set of the combined Arabic, English and Spanish data. Figure is best viewed in color.}
    \label{fig:perclass_barplot}
\end{figure}

Fig.~\ref{fig:perclass_barplot} illustrates the validation F1 score for all classes on the combined data using the best model on combined data obtained in Section~\ref{sec:comparison_sota}. The model performs the best for the {\it joy} class with 80.9\% macro-f1 and performs the worst for the {\it trust} class with 25.0\% macro-f1. The {\it surprise} and the {\it trust} classes perform the worst among all as expected since their number of instances is the least. Discrimination of the {\it optimism} is significantly better than the {\it pessimism} as the number of instances in the {\it optimism} class is significantly higher than the number of instances in the {\it pessimism} class. Interestingly, the {\it anticipation} class is the third worst performing class although it is not the third in terms of rarity, which is consistent with the  results of the NVIDIA study~\cite{kant2018practical}. Our model performs around 70\% for the rest of the classes, i.e., {\it anger}, {\it disgust}, {\it fear}, {\it joy}, {\it love}, {\it optimism} and {\it sadness}. In the following section, we demonstrate the performance improvements obtained by the components in our method.

\subsection{Ablation Study}
Here, we perform an ablation study to assess the performance gains obtained by the components in our method. We experiment with recurrent neural networks (RNN), gated recurrent unit (GRU), standard cross entropy loss and the XML-CNN model that is proposed for the extreme multi-label classification tasks with more than thousand labels.

\begin{table}[htbp!]
\centering
\begin{tabular}{@{}lc@{}}
\toprule

    Network & Validation Macro-F1 \\ \midrule
    Bidirectional LSTM & 59.4 \\ \midrule
    Bidirectional LSTM $\backslash$w CE & 57.1 \\
    Unidirectional LSTM & 57.1 \\
    Bidirectional RNN & 56.4 \\
    Bidirectional GRU & 57.7 \\
XML-CNN~\cite{liu2017deep} &
    47.8\\
     \bottomrule
\end{tabular}
\caption{Ablation study of different network architectures and losses on the validation set of the combined Arabic, English and Spanish data. "$\backslash$w CE" stands for "with cross entropy loss".}
\label{tab:ablation_model}
\end{table}

    Table~\ref{tab:ablation_model} presents the results on the validation set of the combined Arabic, English and Spanish data when the recurrent component is changed with other models and the loss changed with the standard cross entropy loss. For each row, we only change loss or model. We keep all other hyperparameters as is. Among all, the bidirectional LSTM with focal loss performs significantly better compared to others. Focal loss improves the model by 2.3\% (absolute) macro-f1 on the validation set. Unidirectional LSTM, which runs on the sentences only in the forward direction, performs 2.3\% worse compared to its bidirectional variant. Although the GRU works better than the RNN, it performs 1.7\% worse compared to the bidirectional LSTM. XML-CNN, which is a CNN based model, performs significantly worse compared to the other variants.
\section{Conclusion}
\label{sec:conclusion}

We have investigated the cross-lingual sentiment analysis in multi-label setting. We have introduced a system that performs sentiment analysis in 100 different languages. To cope with the inherent class imbalance problem of multi-label classification, we have introduced a dynamic weighting method to remedy the inherent class imbalance problem of multi-label classification, which balances the loss contribution of the classes as the training progresses, unlike the static weighting methods that assign non-changing weights to the classes. We have adapted the focal loss to the multi-label setting from the single-label object recognition literature. Moreover, we have derived a macro-f1 maximization method in linear time complexity for choosing class-specific thresholds to produce predictions. Our system has achieved the state-of-the-art performance in 7 out of 9 metrics in 3 different languages on the SemEval emotion classification competition~\cite{SemEval2018Task1}. We have demonstrated the performance gains compared to the first ranking methods in the SemEval emotion classification competition~\cite{SemEval2018Task1} and the common baselines. We have also evaluated our method in the cross-lingual setting. We have demonstrated the performance gains obtained by the dynamic weighting and analyzed the effects of the components of our method through an ablation study. 

\bibliography{ref.bib}
\balance
\bibliographystyle{IEEEtran}

\end{document}